\documentclass[conference,onecolumn]{IEEEtran}
\IEEEoverridecommandlockouts
\usepackage{cite}
\usepackage{amsmath,amssymb,amsthm,booktabs,graphicx,microtype,cite}
\usepackage{algorithm}
\usepackage{algpseudocode}
\usepackage{graphicx}
\usepackage{textcomp}
\usepackage{balance}
\usepackage{xcolor}
\usepackage{url}
\def\BibTeX{{\rm B\kern-.05em{\sc i\kern-.025em b}\kern-.08em
    T\kern-.1667em\lower.7ex\hbox{E}\kern-.125emX}}
\newtheorem{proposition}{Proposition}

\newcommand{\norm}[1]{\left\lVert #1\right\rVert}
\newcommand{\R}{\mathbb{R}}

\begin{document}

\title{
Beyond Conventional Federated Learning \\via High-Order Regularization 
\thanks{The Research Foundation Flanders (FWO) research project G081222N and UA BOF DocPRO4 projects with ID 46929 and 48996 partially supported the paper's authors.}
}

\author{
	\IEEEauthorblockN{Alireza Kabgani, Masoud Ahookhosh}
\IEEEauthorblockA{\textit{Department of Mathematics}, 
\textit{University of Antwerp}, 
Antwerp, Belgium \\
Email: alireza.kabgani@uantwerp.be, masoud.ahookhosh@uantwerp.be}
}

\maketitle
\pagestyle{plain}
\thispagestyle{plain}
\begin{abstract}
Federated clients that perform several local optimization steps can return parameter displacements with widely different magnitudes. The quadratic regularization of FedProx grows linearly with displacement and therefore offers limited control over the contrast between ordinary and unusually large client movements. We here introduce HiFedProx, which replaces the quadratic penalty with a scale-matched power-type regularizer indexed by $p\geq2$. All powers have the same regularization-gradient magnitude at a reference displacement $R$, while every $p>2$ gives a weaker response below $R$ and a stronger response above it. An exact affine reference calculation shows that increasing $p$ compresses relative displacement disparities, although very large powers approach fixed-radius behavior and increase local curvature. HiFedProx combines this geometry with finite-budget stochastic client optimization and same-minibatch Armijo backtracking. In paired five-seed experiments on a frozen 60-writer FEMNIST subset, a common-parameter study over $p\in\{2,3,4,5,6,7,8\}$ shows similar clean-training performance but substantial gains under composite stress. The lowest moderate- and severe-stress losses occur at $p=7$ and $p=6$, improving over $p=2$ by $11.44\%$ and $23.16\%$, respectively. Although displacement-tail ratios continue to decrease through $p=8$, predictive performance peaks in an intermediate range and Armijo trial cost increases with $p$. These results indicate that the exponent should be calibrated rather than maximized. In our experiments, $p=5$--$7$ provides the most useful range.
\end{abstract}

\begin{IEEEkeywords}
federated learning, HiFedProx, high-order regularization, Armijo backtracking, client displacement
\end{IEEEkeywords}

\section{Introduction}
Federated learning (FL) trains a model by solving a distributed optimization problem in which multiple clients collaboratively train a shared model from distributed data without transmitting their raw examples to a central server. We consider a system consisting of one central server and $N \geq 1$ clients, where client $i$ stores a local dataset 
$\mathcal{D}_i=(z_{ij})_{j=1}^{n_i}$ and minimizes the empirical objective $f_i$ given by
\begin{equation} \label{eq:local-objective}
    f_i(w)=\frac{1}{n_i}\sum_{j=1}^{n_i}\ell(w;z_{ij}).
\end{equation}
The learning task is to solve the client-uniform optimization problem
\begin{equation}\label{eq:global-objective}
    \min_{w\in\mathbb{R}^d} F(w)=\frac{1}{N}\sum_{i=1}^{N}f_i(w),
\end{equation}
where each client contributes equally to the global objective regardless of its local sample size. 

A widely used approach is Federated Averaging (FedAvg), in which the server broadcasts the current model to a subset of clients, each client performs several local stochastic-gradient updates, and the resulting local models are averaged to obtain the next global iterate~\cite{mcmahan2017fedavg}. While multiple local updates substantially reduce communication, they also amplify the effects of statistical heterogeneity and optimization variability, often leading to highly unequal client displacements. In particular, some clients may consistently move much farther from the global model because of heterogeneous data distributions, aggressive local optimization, or larger local workloads. This imbalance motivates optimization methods that explicitly account for disparities in local update magnitudes while preserving the communication efficiency of federated learning.

To moderate excessive client displacements, FedProx augments each client's local optimization problem with a quadratic proximal term~\cite{li2020fedprox}, i.e.,
\begin{equation}\label{eq:fedProx}
    \min_{u\in\mathbb{R}^d}~ Q_i(u;w^t):=f_i(u) + \frac{\mu}{2} \|u-w^t\|^2,
\end{equation}
where $w^t$ denotes the current global model. The quadratic regularization penalizes deviations from $w^t$, and increasing the parameter $\mu$ uniformly strengthens this penalty for all client displacements. We instead ask whether the regularizer can remain relatively mild for typical client displacements while responding more aggressively to unusually large ones.

Motivated by the need for a more flexible regularization of client displacements, we generalize the quadratic proximal term to a family of high-order regularizers inspired by recent advances in high-order proximal-point methods~\cite{nesterov2021inexact,Nesterov2023a, Ahookhosh24,Kabganidiff,KabganiItsDEAL,Kabgani24itsopt,Ahookhosh2025,Kabgani2026Robust}. Specifically, we consider the local optimization models
\begin{equation}\label{eq:intro-model}
    \min_{u\in\R^d} \quad Q_{i,p}(u;w^t):=f_i(u)+\frac{\mu}{pR^{p-2}}\|u-w^t\|^{p},
\end{equation}
where $ p\ge2$ and $\mu>0$ controls the regularization strength, $R>0$ is a reference displacement scale, and $p$ provides an additional degree of flexibility. The normalization is chosen so that the magnitude of the regularization gradient equals $\mu R$ whenever $\|u-w^t\|=R$, independently of $p$. Consequently, compared with the scale-matched quadratic regularizer (i.e., $p=2$), every choice $p>2$ yields a milder regularization response for displacements below $R$ and a stronger response for displacements above $R$. 
We restrict attention to $p\geq2$ because the subquadratic regime $1<p<2$ reverses
this behavior, i.e., it penalizes small displacements more strongly and large displacements more weakly than the quadratic model.

We incorporate the family \eqref{eq:intro-model} into a finite-budget stochastic client solver with same-minibatch Armijo backtracking and call the resulting method HiFedProx. All choices of $p\geq 2$ use the same client initialization, sampling rule, local optimization procedure, server aggregation, and stochastic-gradient budget and only the radial regularization term
changes. The method remains first order and requires neither Hessians nor higher-order derivatives. The case $p=2$ gives the safeguarded quadratic comparator, while values $p>2$
produce high-order variants.

The paper makes three main contributions. \textit{First}, we introduce a scale-matched family of high-order proximal regularizers for federated learning and 
show how the parameters $\mu$, $R$, and $p$ separately control the strength, crossover scale, and shape of the regularization response.
 \textit{Second}, we develop HiFedProx, a complete first-order federated procedure with finite local budgets and capped same-minibatch Armijo backtracking. We also derive an exact affine reference response and a smooth fixed-batch safeguard result that clarify the radial mechanism and the role of backtracking. 
 \textit{Third}, we conduct a paired FEMNIST exponent-sensitivity study over 
 $p\in\{2,3,4,5,6,7,8\}$ with matched gradient and communication budgets. The results identify a useful intermediate range, around $p=5$--$7$ in the present protocol: larger powers continue to compress displacement disparities, but predictive performance saturates and then deteriorates while line-search cost increases.
 
The remainder of this paper is organized as follows. In Section~\ref{sec:related}, we discuss the related works. In Section~\ref{sec:method}, we introduce scale-matched regularization and HiFedProx. In Section~\ref{sec:geometry}, the geometry and local safeguard are discussed. 
Section~\ref{sec:experiment} presents the experimental design, Section~\ref{sec:results} reports the FEMNIST results, and Section~\ref{sec:discussion} discusses the choice of $p$ and the limitations. Section~\ref{sec:conclusion} concludes the paper.

%%%%%%%%%%%%%%%%%%%%%%%%%%%%%%%%%%%%%%%%%%%%%%%%%%%%%%%%%%%%%%%%%%%%%%%%%%%%%%%%%%%%
%%%%%%%%%%%%%%%%%%%%%%%%%%%%%%%%%%%%%%%%%%%%%%%%%%%%%%%%%%%%%%%%%%%%%%%%%%%%%%%%%%%%
\section{Related Work} \label{sec:related}
FedProx is the closest related method, as it augments each client's local objective with a quadratic proximal regularizer while maintaining a single shared global model~\cite{li2020fedprox}. Similarly, pFedMe employs quadratic proximal subproblems, but uses them to learn personalized client models rather than a single global model~\cite{dinh2020pfedme}. More recently, adaptive proximal methods have focused on dynamically adjusting the coefficient of a quadratic regularizer while retaining its quadratic form~\cite{hajarizadeh2026dynamu}. Our work differs in that it generalizes the quadratic proximal regularizer to a scale-matched family of high-order regularizers. Rather than varying only the regularization coefficient, we introduce the exponent $p$ as an additional design parameter, with the regularizers calibrated to have the same gradient magnitude at a prescribed reference displacement $R$.

Several federated optimization methods address sources of performance degradation other than the choice of proximal regularizer. SCAFFOLD, FedNova, and FedVARP mitigate client drift, heterogeneous local computation, and variance arising from partial client participation, respectively~\cite{karimireddy2020scaffold,wang2020fednova,jhunjhunwala2022fedvarp}. These approaches are complementary to the high-order proximal regularization considered in this work. In a different direction, nonsmooth $\ell_1$-based regularizers promote sparse, coordinatewise client updates~\cite{shi2023fedl1}, whereas our regularization is smooth, isotropic, and based on high-order Euclidean norms.

Federated second-order methods, such as FedNL, exploit Hessian information to construct cubic Taylor models for local optimization~\cite{safaryan2022fednl}. In contrast, our approach modifies the proximal regularization in the client objective while remaining entirely first-order. Furthermore, server-side clipping limits client updates only after local training has been completed, whereas our high-order proximal regularization influences the optimization trajectory throughout local training; see, for example, analyses of clipped FedAvg~\cite{zhang2022clipping}.

%%%%%%%%%%%%%%%%%%%%%%%%%%%%%%%%%%%%%%%%%%%%%%%%%%%%
\section{Scale-Matched Regularization and HiFedProx}
\label{sec:method}
\subsection{Federated objective and scale matching}
In this section, we consider the client-uniform optimization problem \eqref{eq:global-objective}, in which each client contributes equally to the global objective regardless of its local sample size. The client sampling and model aggregation procedures follow the same client-uniform convention. The high-order regularizer in \eqref{eq:intro-model} is applied only during local training; predictive loss and accuracy are evaluated using the resulting model without the regularization term.

Let us define the power regularizer and the radial map
\[
 \phi_{p,R,\mu}(d)=\frac{\mu}{pR^{p-2}}\norm{d^p},
 \quad
 J_p(d)=\begin{cases}\norm{d^{p-2}}d,&d\ne0,\\0,&d=0,\end{cases}
\]
where its gradient and magnitude are given by
\begin{align}
 \nabla\phi_{p,R,\mu}(d)&=\mu R^{2-p}J_p(d),\label{eq:reg-gradient}\\
 \norm{\nabla\phi_{p,R,\mu}(d)}&=\mu R\left(\frac{\norm d}{R}\right)^{p-1}.\label{eq:reg-magnitude}
\end{align}
All powers therefore have regularization-gradient magnitude $\mu R$ at $\norm d=R$. Comparing a general power with the quadratic response gives
\begin{equation} \label{eq:relative-force}
 \frac{\norm{\nabla\phi_{p,R,\mu}(d)}}{\norm{\nabla\phi_{2,R,\mu}(d)}}
 =\left(\frac{\norm d}{R}\right)^{p-2}.
\end{equation}
For $p=3$, the magnitude is $\mu\norm d^2/R$, compared with $\mu\norm d$ for $p=2$. Figure~\ref{fig:scale-match} shows the normalized responses. The parameter $R$ is the crossover at which they agree, $\mu$ rescales the response, and $p$ controls its shape. Only the gradient magnitude is matched at $R$, and the regularizer values and curvatures remain power dependent.
The parameter $R$ is a crossover scale, not a clipping radius or hard constraint.

\begin{figure}[h]
 \centering
 \includegraphics[width=0.55\textwidth]{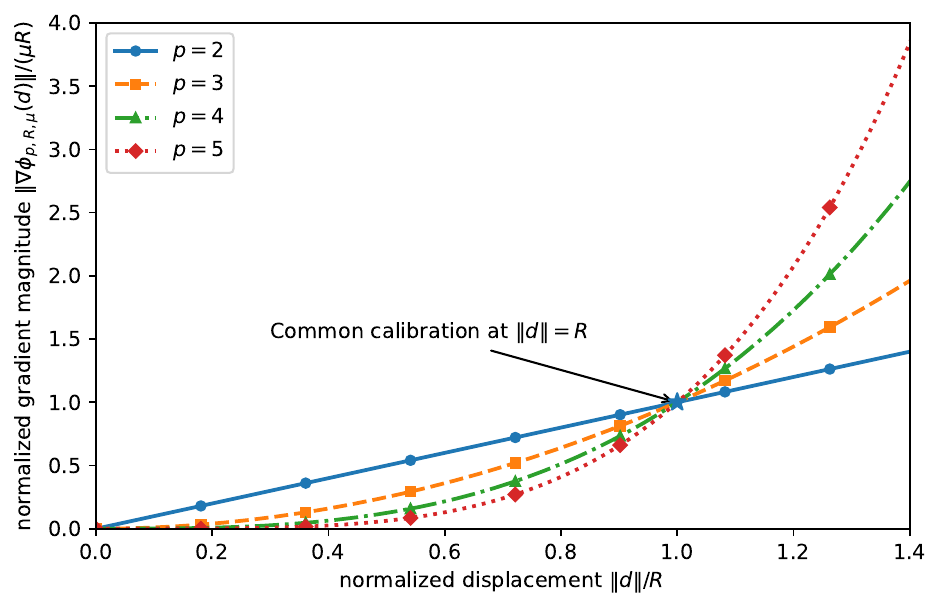}
 \caption{Normalized quadratic and $p$th-order regularization-gradient magnitudes. Both equal one at $\norm d/R=1$.}
 \label{fig:scale-match}
\end{figure}

\subsection{Finite-budget client update}
Training proceeds in communication rounds. 
At round $t$, the server has the current model $w^t$ and samples a subset $S_t\subseteq\{1,\ldots,N\}$ of $m:=|S_t|$ clients uniformly without replacement. The server sends $w^t$ to the selected clients.
Each selected client starts from $w^t$, returns a finite-budget point $u_i^t$, and communicates the displacement $\Delta_i^t=u_i^t-w^t$. The equal-weight server update is
\begin{equation}\label{eq:server-update}
 w^{t+1}=w^t+\frac1{|S_t|}\sum_{i\in S_t}\Delta_i^t
 =\frac1{|S_t|}\sum_{i\in S_t}u_i^t.
\end{equation}
For a nonempty minibatch $B\subseteq\mathcal D_i$, let
\[
 Q_{i,p}(u;w^t,B)=f_i(u;B)+\phi_{p,R,\mu}(u-w^t).
\]
Let $\operatorname{AD}_u Q_{i,p}(u;w^t,B)$ denote the direction returned by automatic differentiation for the complete regularized batch model. At differentiable points, this is $\nabla_u Q_{i,p}(u;w^t,B)$. Algorithm~\ref{alg:client} uses this direction and evaluates the current value and every trial value on the same minibatch. If no trial passes the test, the client keeps its current iterate.

\begin{algorithm}[h]
\caption{HiFedProx (High-order Fed-Prox)}
\label{alg:client}
\small
\begin{algorithmic}[1]
\Require $w^t,\mathcal D_i,p,\mu,R,K_i^t,\alpha_0,c,\rho,J$
\State $u\gets w^t$
\For{$k=0,\ldots,K_i^t-1$}
 \State sample nonempty $B_k\subseteq\mathcal D_i$
 \State $g\gets\operatorname{AD}_u Q_{i,p}(u;w^t,B_k)$
 \State $q\gets Q_{i,p}(u;w^t,B_k)$; $\alpha\gets\alpha_0$
 \For{$j=0,\ldots,J$}
  \State $v\gets u-\alpha g$
  \If{$Q_{i,p}(v;w^t,B_k)\le q-c\alpha\norm g^2$}
   \State $u\gets v$; \textbf{break}
  \EndIf
  \If{$j<J$} \State $\alpha\gets\rho\alpha$ \EndIf
 \EndFor
\EndFor
\State \Return $u-w^t$
\end{algorithmic}
\end{algorithm}

Note that each local iteration uses one backward pass. The current objective value is obtained from the same forward computation, and each Armijo trial adds one further forward evaluation. The backtracking procedure therefore adds neither backward passes nor communication.
Moreover, let us emphasize that setting $p=2$ gives the safeguarded quadratic comparator. 
Every value $p\geq2$ uses the same client optimizer, sampling rule, aggregation rule, and gradient budget, and only the radial power changes. No variant uses Hessians or higher-order derivatives.

%%%%%%%%%%%%%%%%%%%%%%%%%%%%%%%%%%%%%%%%%%%%%%%%%%%%%%%%%%%%%%%%%%%%%%%%%%%%%%%%%%%%
%%%%%%%%%%%%%%%%%%%%%%%%%%%%%%%%%%%%%%%%%%%%%%%%%%%%%%%%%%%%%%%%%%%%%%%%%%%%%%%%%%%%
\section{Geometry and Local Safeguard} \label{sec:geometry}
The following calculation isolates the radial effect before curvature, stochastic sampling, and repeated local steps enter. It treats $g\in\R^d$ as a prescribed vector. When the loss is differentiable, $g$ can be chosen as a client gradient.

\begin{proposition}[Affine client-model response]
\label{prop:linearized}
Let $p\ge2$, $\mu,R>0$, and $g\in\R^d$. The problem
\begin{equation}
 \min_d\left\{\langle g,d\rangle+\frac{\mu}{pR^{p-2}}\norm d^p\right\}
 \label{eq:linearized-model}
\end{equation}
has the unique solution $d_p(0)=0$ and, for $g\ne0$,
\begin{equation}
 d_p(g)=-R\left(\frac{\norm g}{\mu R}\right)^{1/(p-1)}\frac{g}{\norm g}.
 \label{eq:linearized-step}
\end{equation}
For nonzero $g_i,g_j$,
\begin{equation}
 \frac{\norm{d_p(g_i)}}{\norm{d_p(g_j)}} =\left(\frac{\norm{g_i}}{\norm{g_j}}\right)^{1/(p-1)}.
 \label{eq:relative-spread}
\end{equation}
\end{proposition}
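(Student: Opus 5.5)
The plan is to show that the objective in \eqref{eq:linearized-model} is strictly convex and coercive, so it has exactly one minimizer, which is then identified by the first-order condition. Write $h(d)=\langle g,d\rangle+\phi_{p,R,\mu}(d)$.

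For $p\ge2$, the map $d\mapsto\norm d^p$ is a strictly increasing convex function of $\norm d$ composed with the norm, so it is convex. Strictness I would check directly: if $d_1\ne d_2$ with $\norm{d_1}\ne\norm{d_2}$, strict convexity of $s\mapsto s^p$ gives a strict inequality. If $\norm{d_1}=\norm{d_2}$, the Euclidean norm is strictly convex along nonparallel directions, so $\norm{(d_1+d_2)/2}<\norm{d_1}$ and the inequality is again strict. Adding the linear term keeps strict convexity. Coercivity follows from
\[
h(d)\ge -\norm g\,\norm d+\frac{\mu}{pR^{p-2}}\norm d^p\to\infty ,
\]
which holds since $p>1$. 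Hence a unique minimizer exists. Because $h$ is differentiable for $p\ge2$ (with gradient \eqref{eq:reg-gradient}, continuous at $0$), this minimizer is the unique solution of
\[
g+\mu R^{2-p}J_p(d)=0 .
\]

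Next I solve this equation. If $g=0$, then $J_p(d)=0$ forces $d=0$. If $g\ne0$, then $d\ne0$ and $d$ is antiparallel to $g$, so I write $d=-r\,g/\norm g$ with $r>0$. Taking norms and using \eqref{eq:reg-magnitude} gives $\mu R(r/R)^{p-1}=\norm g$. Therefore
\[
r=R\left(\frac{\norm g}{\mu R}\right)^{1/(p-1)},
\]
which is \eqref{eq:linearized-step}. Substituting back confirms that this $d$ satisfies the stationarity equation.

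The ratio \eqref{eq:relative-spread} then follows at once: $\norm{d_p(g)}=R\,(\norm g/(\mu R))^{1/(p-1)}$, and the common factors $R$ and $(\mu R)^{-1/(p-1)}$ cancel in the quotient. The only mildly delicate point is uniqueness, namely the strict convexity argument in the equal-norm case. Alternatively, uniqueness can be read off from the stationarity equation alone, since $J_p$ is injective (its radial profile $s\mapsto s^{p-1}$ is strictly increasing), and for a convex $h$ stationarity is sufficient.
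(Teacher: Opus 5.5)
Your proposal is correct and takes the same route as the paper. The paper's proof also uses coercivity and strict convexity for existence and uniqueness. It then reads antiparallelism off the first-order condition $g+\mu R^{2-p}\norm{d}^{p-2}d=0$, takes norms to get the magnitude, and divides to get the ratio. You fill in details the paper leaves implicit: the $g=0$ case and the equal-norm case of strict convexity. That case also covers $d_2=-d_1$, since the midpoint is then $0$.
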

\emph{Proof.} The objective is coercive and strictly convex. For $g\ne0$, the optimality condition
$g+\mu R^{2-p}\norm d^{p-2}d=0$
shows that $d$ points opposite to $g$. Taking norms and restoring the direction gives~\eqref{eq:linearized-step} and division gives~\eqref{eq:relative-spread}.
\hfill$\square$

Writing $\theta(g)=\norm g/(\mu R)$ gives the dimensionless response
\begin{equation}
 \frac{\norm{d_p(g)}}{R}=\theta(g)^{1/(p-1)}.
 \label{eq:dimensionless-response}
\end{equation}
The value $\theta=1$ gives the common displacement $R$. For every $p>2$, the high-order displacement is larger than the quadratic one when $0<\theta<1$, equal to it when $\theta=1$, and smaller when $\theta>1$. Moreover, if $\norm{g_i}>\norm{g_j}$, then~\eqref{eq:relative-spread} is strictly smaller than $\norm{g_i}/\norm{g_j}$, and this compression becomes stronger as $p$ increases. The transformation preserves ordering but does not uniformly shrink all displacements.
If $s_{(1)}\leq\cdots\leq s_{(M)}$ are ordered direction norms, the corresponding affine displacement norms satisfy
\begin{equation} \label{eq:ordered-response}
 r_{(k)}^{(p)} =R\left(\frac{s_{(k)}}{\mu R}\right)^{1/(p-1)}.
\end{equation}
This identity motivates the upper-tail displacement diagnostic used in the experiments.

Larger powers strengthen disparity compression, but the limiting behavior warns against the rule ``larger is always better.'' For every fixed $g\neq0$,
\begin{equation} \label{eq:large-p-step}
 \lim_{p\to\infty}\norm{d_p(g)}=R.
\end{equation}
Thus, the affine response approaches a fixed-radius step and becomes nearly insensitive to the magnitude of $g$. Equivalently,
\begin{equation} \label{eq:large-p-regularizer}
 \phi_{p,R,\mu}(d) =\frac{\mu R^2}{p}\left(\frac{\norm d}{R}\right)^p
 \to
 \begin{cases}
 0,&\norm d\leq R,\\
 +\infty,&\norm d>R,
 \end{cases}
\end{equation}
pointwise. Very large powers therefore approximate a hard displacement-radius constraint: they can suppress extreme movements, but may also erase useful magnitude information.

The $p$th-order regularizer also becomes more curved farther away from the server model. On $\{d:\norm d\le D\}$, its gradient is Lipschitz with constant
\begin{equation}
 L_\phi(D)=\mu(p-1)\left(\frac{D}{R}\right)^{p-2}.
 \label{eq:Lphi}
\end{equation}
% Thus, $L_\phi(D)=\mu$ for $p=2$, while $L_\phi(D)=2\mu D/R$ for $p=3$. This motivates adapting the trial step as the client moves away from $w^t$.
For $D>R$, this quantity grows rapidly with $p$, i.e., the larger powers can require smaller accepted steps and more backtracking. For any fixed $D<R$, $L_\phi(D)\to0$ as $p\to\infty$, although the curvature need not decrease monotonically for moderate powers.

For a smooth fixed-batch objective, let $g=\nabla Q(u)$ and suppose that $\nabla Q$ is $L_Q$-Lipschitz on the trial segment. The descent lemma ensures that
\begin{equation}
 Q(u-\alpha g)\le Q(u)-\alpha\left(1-\frac{L_Q\alpha}{2}\right)\norm g^2.
 \label{eq:smooth-armijo-reference}
\end{equation}
Hence, every $\alpha\le2(1-c)/L_Q$, passes the Armijo test with parameter $c$~\cite{armijo1966minimization}. Algorithm~\ref{alg:client} employs this acceptance rule as a finite-budget numerical safeguard.

These formulas motivate the two displacement diagnostics used in the following. Groupwise means show whether the response is selective, and an upper-tail-to-median ratio measures relative spread among the clients selected in each round.

The affine formula describes a reference subproblem rather than a complete federated convergence result. Even with exact client solutions and full participation, equal averaging for $p>2$ is not generally a common-step gradient method on the average high-order envelope: the factor relating a client displacement to its envelope gradient depends on that displacement norm. Finite local budgets and partial participation introduce further errors. We therefore use the analysis to explain the radial mechanism and the role of backtracking, while leaving the outer convergence to future work.

%%%%%%%%%%%%%%%%%%%%%%%%%%%%%%%%%%%%%%%%%%%%%%%%%%%%%%%%%%%%%%%%%%%%%%%%%%%%%%%%%%%%
%%%%%%%%%%%%%%%%%%%%%%%%%%%%%%%%%%%%%%%%%%%%%%%%%%%%%%%%%%%%%%%%%%%%%%%%%%%%%%%%%%%%
\section{Experimental Design}
\label{sec:experiment}

\subsection{Data, model, and stress conditions}
\label{subsec:data-model-stress}

The primary study uses frozen FEMNIST caches constructed from 60 writers, with each writer representing a single client. The caches comprise 8,064 training, 995 calibration, and 1,021 evaluation examples. The pooled training, calibration, and evaluation sets each contain all 62 FEMNIST character classes~\cite{caldas2018leaf}. Although the calibration and evaluation sets are disjoint from the training set, they are drawn from the same writers. Consequently, the experiments evaluate generalization to new examples from known clients rather than to previously unseen clients. Each image is stored as a flattened $14\times14$ grayscale array with pixel values in $[0,1]$ and reshaped before being passed to the network.

The classifier has two $3\times 3$ convolutional layers with unit padding and 16 and 32 output channels, respectively. Each convolution is followed by ReLU and $2\times 2$ max pooling. The resulting $32\times 3\times 3$ representation is flattened and passes through a 64-unit fully connected ReLU layer and a 62-logit output layer. The network has 27,326 trainable parameters. Predictive performance is evaluated
using cross-entropy loss and classification accuracy. The client regularizer is used only during local training.

The training objective is nonconvex but smooth within each fixed activation and pooling pattern, with nondifferentiable boundaries introduced by ReLU and max pooling. At each local step, PyTorch automatic differentiation returns a direction for the complete regularized minibatch objective, following its documented rules in nondifferentiable elementary operations~\cite{pytorchAutograd}. At differentiable points, this direction is the ordinary gradient. Proposition~\ref{prop:linearized} applies to any prescribed direction and therefore provides a reference calculation for the direction returned by automatic differentiation. It does not describe the complete nonlinear client trajectory.
The inequality~\eqref{eq:smooth-armijo-reference} provides us with a smooth fixed-batch motivation for backtracking, while the experiment uses the capped same-minibatch rule as a numerical safeguard at every local step.

\begin{table}[h]
\caption{Primary FEMNIST protocol. In the stressed conditions,
the multipliers apply only to 12 clients selected once per seed
and held fixed throughout training.}
\label{tab:protocol}
\centering
%\footnotesize
\setlength{\tabcolsep}{3.5pt}
\begin{tabular}{@{}ll@{}}
\toprule
Item & Setting \\
\midrule
Training / calibration / evaluation & $8064/995/1021$ \\
Writers / pooled classes & $60/62$ \\
Rounds / clients per round & $100/18$ \\
Minibatch / ordinary local steps & $16/3$ \\
Initial local step / Armijo trial & $0.5$ \\
Clean & labels unchanged, step $\times1$, \\
      & local steps $\times1$ \\
Moderate stress  & cyclic label corruption, step $\times2$, \\
                 & local steps $\times3$ \\
Severe stress  & cyclic label corruption, step $\times4$, \\
               &  local steps $\times3$ \\
Powers in sensitivity study & $p\in \{2,3,4,5,6,7,8\}$ \\
Evaluation seeds & $3001$--$3005$ \\
\bottomrule
\end{tabular}
\end{table}

In each communication round, 18 of the 60 clients are sampled uniformly without replacement. Ordinary selected clients use three local iterations with initial step-size $0.5$. In each stressed condition, 12 clients are selected once for each seed and remain in the stress-test group throughout all 100 rounds. For a fixed seed, this group is shared among the compared methods.
For every stress-test client, all training labels are replaced according to $y\mapsto(y+1)\bmod 62$,
while the calibration and evaluation labels remain unchanged. Under moderate stress, these clients use nine local iterations and an initial step-size $1.0$. Under severe stress, they use nine local iterations and initial step-size $2.0$. The remaining clients retain the normal settings in Table~\ref{tab:protocol}. This composite intervention is designed to induce a persistent group of comparatively large client displacements. The resulting displacement separation is measured in Section~\ref{sec:results} rather than assumed.

\subsection{Exponent protocol, pairing, and evaluation}
\label{subsec:calibration-evaluation}

The reference pair $(\mu,R)=(0.03,1)$ was selected by the original 60-round clean-plus-moderate calibration for the quadratic and cubic variants. We keep this pair fixed for 
\[p\in\{2,3,4,5,6,7,8\},\] allowing the sensitivity study to isolate the effect of the exponent.  Although any real exponent $p\geq2$ is theoretically admissible, we restrict our numerical study to the consecutive integers for a transparent and systematic comparison. Notice that the real-valued powers $p\geq 2$ may further refine the choice of $p$ and are left for future investigation.

For each condition and seed, all powers share the initial network, stress-test clients, participating clients in every round, and client-local minibatch streams. Their local-step, backward-pass, and communication budgets are therefore identical. The Armijo parameters are $c=10^{-4}$, $\rho=0.5$, and $J=12$, so each local search tests at most 13 step sizes. The direction, current value, and all trial values within one local search use the same minibatch. If all trials fail, the beginning-of-step iterate is restored.

The primary outcomes are pooled cross-entropy and pooled accuracy on the 1,021 evaluation examples. We also report the linearly interpolated tenth percentile of the 60 writer accuracies. For the $m=18$ selected displacement norms in round $t$, ordered as $r_{(1)}^t\leq\cdots\leq r_{(m)}^t$, define
\begin{equation}\label{eq:tail-metric}
 T_t=\frac{r_{(\lceil0.9m\rceil)}^t}{\max\{r_{(\lceil0.5m\rceil)}^t,10^{-12}\}}.
\end{equation}
The reported tail statistic averages $T_t$ over rounds. For $m=18$, the numerator and denominator are the 17th and 9th ordered norms. Larger values indicate a more dominant upper displacement tail.

For selected paired comparisons, we enumerate all $5^5=3125$ ordered bootstrap resamples of the five seedwise loss differences and report the $2.5\%$ and $97.5\%$ percentiles of the resampled means~\cite{efron1979bootstrap}. Because the exponent study is exploratory, involving many possible pairwise contrasts, we emphasize effect sizes, intervals, and paired directions rather than treating every comparison as a separate hypothesis test.

\section{Primary FEMNIST Results}
\label{sec:results}

\subsection{Predictive performance and exponent sensitivity}
\label{subsec:predictive-performance}

Table~\ref{tab:main} reports the common-parameter exponent study. Clean losses remain close among all powers, ranging from $2.2610$ to $2.2922$. Under moderate stress, loss decreases from $2.7208$ at $p=2$ to its minimum $2.4095$ at $p=7$, then rises to $2.4328$ at $p=8$. Under severe stress, loss improves through $p=6$, reaching $2.6235$, and then increases to $2.6772$ and $2.6701$ at $p=7$ and $p=8$. The accuracy results show the same broad intermediate-power advantage, although their exact optima differ: $p=5$ gives the highest moderate-stress accuracy, while $p=6$ gives the highest severe-stress accuracy.

\begin{table*}[h]
\caption{Final FEMNIST results over five paired seeds, reported as mean $\pm$ sample standard deviation. All powers use the common setting $(\mu,R)=(0.03,1)$.}
\label{tab:main}
\centering
\footnotesize
\setlength{\tabcolsep}{5.0pt}
\begin{tabular}{cccccc}
\toprule
$p$ & Clean loss & Moderate loss & Severe loss & Moderate acc. (\%) & Severe acc. (\%) \\
\midrule
$2$ & $2.2833\pm0.1000$ & $2.7208\pm0.1087$ & $3.4141\pm0.0913$ & $38.28\pm5.87$ & $11.54\pm4.89$ \\
$3$ & $2.2922\pm0.0943$ & $2.5770\pm0.1722$ & $2.9552\pm0.1055$ & $41.27\pm7.61$ & $30.75\pm5.87$ \\
$4$ & $\mathbf{2.2610\pm0.0637}$ & $2.4928\pm0.1066$ & $2.7991\pm0.1144$ & $43.72\pm3.89$ & $36.06\pm4.01$ \\
$5$ & $2.2690\pm0.0579$ & $2.4188\pm0.1005$ & $2.7059\pm0.1022$ & $\mathbf{45.70\pm3.05}$ & $39.00\pm2.24$ \\
$6$ & $2.2640\pm0.0707$ & $2.4218\pm0.0985$ & $\mathbf{2.6235\pm0.0561}$ & $45.39\pm1.99$ & $\mathbf{40.88\pm1.29}$ \\
$7$ & $2.2634\pm0.0414$ & $\mathbf{2.4095\pm0.0851}$ & $2.6772\pm0.0790$ & $45.41\pm2.60$ & $39.84\pm2.74$ \\
$8$ & $2.2772\pm0.0780$ & $2.4328\pm0.0540$ & $2.6701\pm0.0443$ & $44.98\pm2.43$ & $40.20\pm1.02$ \\
\bottomrule
\end{tabular}
\end{table*}

Figure~\ref{fig:loss-power} makes the intermediate-power plateau visible. On the clean-plus-moderate calibration-style score, $p=7$ has the lowest mean value, $2.3365$, while $p=5$ and $p=6$ remain close at $2.3439$ and $2.3429$. The moderate-stress advantage of $p=7$ over $p=5$ and $p=6$ is small, and the corresponding paired bootstrap intervals contain zero. Under severe stress, $p=6$ improves over $p=5$ by $0.0825$ on average; the exhaustive paired bootstrap interval is $[0.0221,0.1210]$, and four of five seeds favor $p=6$. Its mean severe-stress loss is also below those of $p=7$ and $p=8$, although those pairwise intervals contain zero. The robust conclusion is therefore an intermediate-power plateau rather than a uniquely optimal exponent: additional tail compression beyond this range does not translate into uniformly better prediction.

\begin{figure}[h]
\centering
\includegraphics[width=0.55\textwidth]{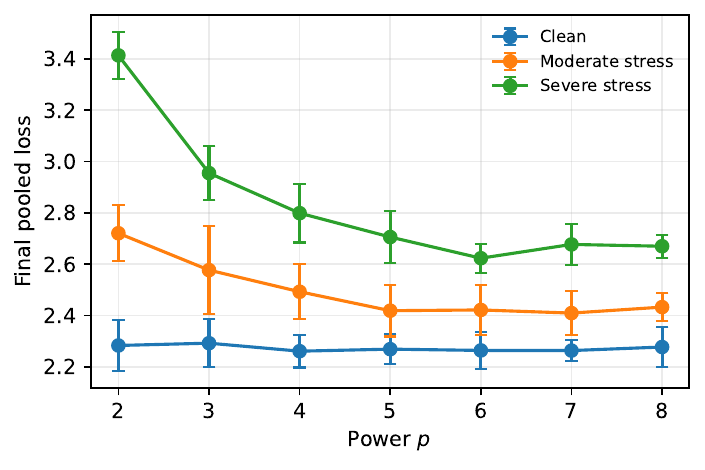}
\caption{Mean final pooled loss versus $p$; bars show one sample standard deviation over five paired seeds.}
\label{fig:loss-power}
\end{figure}

\subsection{Displacement compression and local cost}
\label{subsec:displacement-cost}

The displacement-tail statistic decreases monotonically with $p$ in every condition; see Table~\ref{tab:diagnostics} and Fig.~\ref{fig:tail-power}. Under severe stress, it falls from $4.936$ at $p=2$ to $2.086$ at $p=8$, a reduction of about $57.7\%$. The stress-test-to-unmodified mean-displacement ratio likewise decreases from $5.21$ to $1.91$. Thus, increasing $p$ continues to equalize relative displacement magnitudes even after predictive loss has stopped improving.

\begin{table*}[h]
\caption{Mean displacement-tail statistic and severe-stress Armijo trial-point cost. The final column is relative to $p=2$.}
\label{tab:diagnostics}
\centering
\footnotesize
\begin{tabular}{cccccc}
\toprule
$p$ & Clean tail & Moderate tail & Severe tail & Severe trials & Trial increase \\
\midrule
$2$ & $1.448$ & $3.070$ & $4.936$ & $7835.0$ & $0.00\%$ \\
$3$ & $1.428$ & $2.561$ & $3.506$ & $8472.2$ & $8.13\%$ \\
$4$ & $1.395$ & $2.210$ & $2.763$ & $8810.8$ & $12.45\%$ \\
$5$ & $1.362$ & $2.015$ & $2.437$ & $9214.6$ & $17.61\%$ \\
$6$ & $1.343$ & $1.905$ & $2.284$ & $9553.6$ & $21.93\%$ \\
$7$ & $1.327$ & $1.817$ & $2.172$ & $9764.6$ & $24.63\%$ \\
$8$ & $1.302$ & $1.777$ & $2.086$ & $10069.6$ & $28.52\%$ \\
\bottomrule
\end{tabular}
\end{table*}

\begin{figure}[h]
\centering
\includegraphics[width=0.55\textwidth]{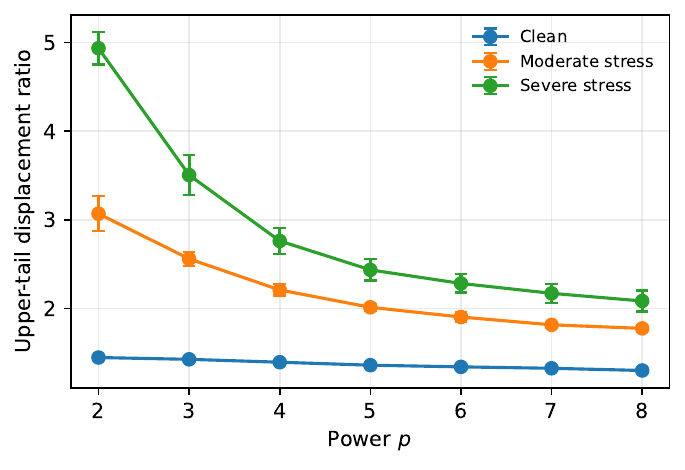}
\caption{Mean upper-tail displacement ratio versus $p$; bars show one sample standard deviation.}
\label{fig:tail-power}
\end{figure}

\begin{figure*}[h]
\centering
\begin{minipage}{0.4\textwidth}
\centering
\includegraphics[width=\linewidth]{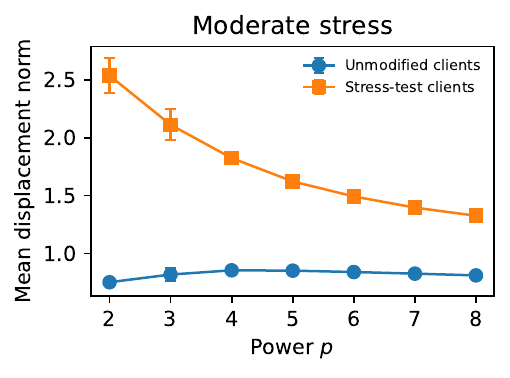}
\end{minipage}\hfill
\begin{minipage}{0.4\textwidth}
\centering
\includegraphics[width=\linewidth]{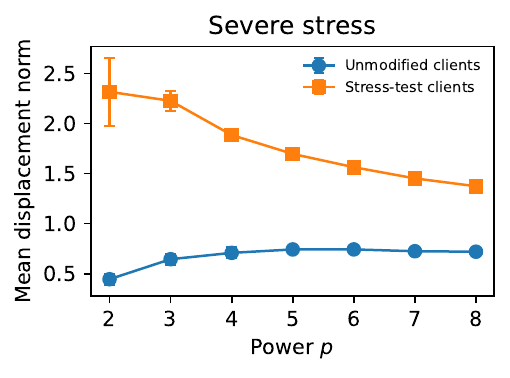}
\end{minipage}
\caption{Mean client-displacement norms versus $p$ under moderate and severe stress. Bars show one sample standard deviation over five seeds. Increasing $p$ first allows more movement in the unmodified group while progressively reducing stress-test-client movement. At the largest powers, the unmodified-client mean also turns downward, consistent with the fixed-radius affine limit.}
\label{fig:group-power}
\end{figure*}

Figure~\ref{fig:group-power} shows how the tail reduction is produced. Relative to $p=2$, intermediate powers increase the mean displacement of unmodified clients while decreasing that of stress-test clients, so the group separation narrows from both directions. For larger powers, the unmodified-client mean also declines. This turnover supports a selective-response interpretation rather than uniform shrinkage and helps explain why the smallest tail at $p=8$ does not result in the best predictive loss.

All powers use the same number of local iterations, backward passes, and client--server messages. Their measured computational difference is the number of Armijo trial-point forward evaluations. Under severe stress, this count grows from $7835.0$ at $p=2$ to $9553.6$ at $p=6$, $9764.6$ at $p=7$, and $10069.6$ at $p=8$. No run records a complete line-search failure or zero-step fallback. The writer-level tenth-percentile accuracy improves sharply from $2.35\%$ at $p=2$ to between $18.49\%$ and $20.15\%$ for $p=5$--$8$ under severe stress, but it does not improve monotonically with the exponent.

The numerical pattern therefore matches the theoretical tradeoff. Increasing $p$ compresses relative displacement disparities, but beyond an intermediate range the additional compression no longer improves the prediction and continues to increase local search cost.

%%%%%%%%%%%%%%%%%%%%%%%%%%%%%%%%%%%%%%%%%%%%%%%%%%%%
\section{Discussion: Choosing the Exponent}
\label{sec:discussion}

We would like to emphasize that there is no universally optimal exponent. The parameter $p$ controls how sharply the regularizer changes around $R$: larger powers are flatter below the crossover and steeper above it. In the affine model, they drive all nonzero displacement magnitudes toward $R$, while the local curvature outside the crossover grows rapidly. Consequently, choosing $p$ balances three effects: suppression of extreme movements, preservation of informative magnitude differences, and local backtracking cost.

For the present FEMNIST protocol, the useful range is approximately $p=5$--$7$, and no single exponent dominates every criterion. The choice $p=5$ is the conservative option because it has the lowest local cost within this range and gives the highest moderate-stress accuracy. The choice $p=6$ is preferable when severe large-displacement robustness is the main priority, since it gives the lowest severe-stress loss and highest severe-stress accuracy. The choice $p=7$ gives the lowest moderate-stress loss and the best clean-plus-moderate score, although its advantage over $p=5$ and $p=6$ is small. The result at $p=8$ rules out the simplistic conclusion that a larger exponent is always better: it gives the smallest displacement tail but does not improve predictive performance and requires the most trial evaluations. Table~\ref{tab:p-guide} summarizes the observed tradeoffs in this protocol.

In practice, $p$ should be selected jointly with $\mu$ and $R$ by validation. A reasonable rule is to choose the smallest exponent whose validation loss lies within a prescribed tolerance of the best observed value, thereby avoiding unnecessary curvature and overcompression. The current comparison deliberately keeps $(\mu,R)$ fixed to isolate the exponent, so it does not establish the jointly optimal triplet $(p,\mu,R)$. A stronger tuning study should use several calibration seeds and keep the five evaluation seeds untouched.

The scope remains limited to one 60-writer subset, one architecture, five seeds, and a composite stress intervention that changes labels, local steps, and initial step sizes together. The study does not test unseen writers, isolate individual stress components, or explore fractional powers. Finally, the affine calculation and smooth fixed-batch safeguard do not prove convergence of the implemented changing-minibatch ReLU procedure with partial participation.

\begin{table}[h]
\caption{Empirical exponent choices in the present protocol.}
\label{tab:p-guide}
\centering
%\footnotesize
\begin{tabular}{ll}
\toprule
Priority & Observed choice \\
\midrule
Lowest cost within the plateau & $p=5$ \\
Best severe-stress performance & $p=6$ \\
Best moderate-stress loss & $p=7$ \\
Strongest tail compression & $p=8$ \\
\bottomrule
\end{tabular}
\end{table}

%%%%%%%%%%%%%%%%%%%%%%%%%%%%%%%%%%%%%%%%%%%%%%%%%%%%%%%%%%%%%%%%%%%%%%%%%%%%%%%%%%%%
%%%%%%%%%%%%%%%%%%%%%%%%%%%%%%%%%%%%%%%%%%%%%%%%%%%%%%%%%%%%%%%%%%%%%%%%%%%%%%%%%%%%
\section{Conclusion} \label{sec:conclusion}
We introduced HiFedProx, a first-order federated procedure based on a scale-matched family of power-type client regularizers. The exponent $p$ controls the radial shape without changing the regularization force at the reference displacement $R$. The affine response shows that larger powers compress relative displacement disparities and approach a fixed-radius behavior, while the curvature analysis predicts increasing local search difficulty outside the crossover region.

The paired FEMNIST study over $p\in\{2,3,4,5,6,7,8\}$ confirms this tradeoff. Predictive performance improves from the quadratic model to an intermediate high-order range: $p=7$ provides us with the lowest moderate-stress loss, $p=5$ gives the highest moderate-stress accuracy, and $p=6$ results in the best severe-stress loss and accuracy. Our biggest choice of $p=8$ further reduces displacement tails but does not improve predictive performance and increases Armijo trial cost. Thus, the exponent should be calibrated rather than maximized. In the present experiment, $p=5$--$7$ provides the most useful range.

%%%%%%%%%%%%%%%%%%%%%%%%%%%%%%%%%%%%%%%%%%%%%%%%%%%%%%%%%%%%%%%%%%%%%%%%%%%%%%%%%%%%
%%%%%%%%%%%%%%%%%%%%%%%%%%%%%%%%%%%%%%%%%%%%%%%%%%%%%%%%%%%%%%%%%%%%%%%%%%%%%%%%%%%%


\begin{thebibliography}{00}
\bibitem{mcmahan2017fedavg} H. B. McMahan, E. Moore, D. Ramage, S. Hampson, and B. Ag\"uera y Arcas, ``Communication-efficient learning of deep networks from decentralized data,'' in \emph{Proc. AISTATS}, 2017, pp. 1273--1282.
    
\bibitem{li2020fedprox} T. Li, A. K. Sahu, M. Zaheer, M. Sanjabi, A. Talwalkar, and V. Smith, ``Federated optimization in heterogeneous networks,'' in \emph{Proc. MLSys}, 2020, pp. 429--450.

\bibitem{nesterov2021inexact}
Y. Nesterov, ``Inexact high-order proximal-point methods with auxiliary search procedure,''
\emph{SIAM J. Optim}, vol. 31, pp. 2807--2828, 2021.

\bibitem{Nesterov2023a}
Y. Nesterov, ``Inexact accelerated high-order proximal-point Methods,''
 \emph{Math. Program.}, vol. 197, pp. 1--26, 2023.
 

\bibitem{Ahookhosh24}
M. Ahookhosh and Y. Nesterov, ``High-order methods beyond the classical complexity bounds: inexact high-order proximal-point methods,'' \emph{Math. Prog.}, vol. 208, pp. 365--407, 2024.
 
\bibitem{Kabganidiff}
A. Kabgani and M. Ahookhosh, 
``Moreau envelope and proximal-point methods under the lens of
high-order regularization,''
\emph{Set-Valued Var. Anal.},
vol. 33, no. 4, Art. no. 47, pp. 1--35, 2025.

\bibitem{KabganiItsDEAL}
A. Kabgani and M. Ahookhosh, ``ItsDEAL: Inexact two-level smoothing descent algorithms for weakly convex optimization,'' arXiv:2501.02155, 2025.

\bibitem{Kabgani24itsopt}
A. Kabgani and M. Ahookhosh,
``ItsOPT: An inexact two-level smoothing framework for nonconvex
optimization via high-order Moreau envelope,''
\emph{SIAM J. Optim.}, in press.

\bibitem{Ahookhosh2025}
M. Ahookhosh, A. Iusem, A. Kabgani, and F. Lara,
``Asymptotic convergence analysis of high-order proximal-point methods
beyond sublinear rates,''
\emph{SIAM J. Optim.}, in press.

\bibitem{Kabgani2026Robust}
A. Kabgani, F. Lara and M. Ahookhosh, ``Robust learning meets quasar-convex optimization: Inexact high-order proximal-point methods,'' 	arXiv:2605.08474, 2026

\bibitem{dinh2020pfedme} C. T. Dinh, N. H. Tran, and T. D. Nguyen, ``Personalized federated learning with Moreau envelopes,'' in \emph{Proc. NeurIPS}, 2020.

\bibitem{hajarizadeh2026dynamu} A. Hajarizadeh and S. Gupta, ``DynaMu: Loss-guided adaptive proximal regularization for federated learning,'' in \emph{Proc. IEEE/CVF CVPR Workshops}, 2026, pp. 3321--3329.

\bibitem{karimireddy2020scaffold} S. P. Karimireddy \emph{et al.}, ``SCAFFOLD: Stochastic controlled averaging for federated learning,'' in \emph{Proc. ICML}, 2020, pp. 5132--5143.
    
\bibitem{wang2020fednova} J. Wang, Q. Liu, H. Liang, G. Joshi, and H. V. Poor, ``Tackling the objective inconsistency problem in heterogeneous federated optimization,'' in \emph{Proc. NeurIPS}, 2020.
    
\bibitem{jhunjhunwala2022fedvarp} D. Jhunjhunwala, P. Sharma, A. Nagarkatti, and G. Joshi, ``FedVARP: Tackling the variance due to partial client participation in federated learning,'' in \emph{Proc. UAI}, 2022, pp. 906--916.

\bibitem{shi2023fedl1} Y. Shi, Y. Zhang, P. Zhang, Y. Xiao, and L. Niu, ``Federated learning with $\ell_1$ regularization,'' \emph{Pattern Recognit. Lett.}, vol. 172, pp. 15--21, 2023.

\bibitem{safaryan2022fednl} M. Safaryan, R. Islamov, X. Qian, and P. Richt\'arik, ``FedNL: Making Newton-type methods applicable to federated learning,'' in \emph{Proc. ICML}, 2022, pp. 18959--19010.

\bibitem{zhang2022clipping} X. Zhang, X. Chen, M. Hong, S. Wu, and J. Yi, ``Understanding clipping for federated learning: Convergence and client-level differential privacy,'' in \emph{Proc. ICML}, 2022, pp. 26048--26067.

\bibitem{armijo1966minimization} L. Armijo, ``Minimization of functions having Lipschitz continuous first partial derivatives,'' \emph{Pacific J. Math.}, vol. 16, no. 1, pp. 1--3, 1966.
    
\bibitem{caldas2018leaf} S. Caldas \emph{et al.}, ``LEAF: A benchmark for federated settings,'' arXiv:1812.01097, 2018.

\bibitem{pytorchAutograd} PyTorch Contributors, ``Autograd mechanics,'' \emph{PyTorch Documentation}, accessed Jul. 2026. [Online]. Available: \url{https://docs.pytorch.org/docs/stable/notes/autograd.html}

\bibitem{efron1979bootstrap} B. Efron, ``Bootstrap methods: Another look at the jackknife,'' \emph{Ann. Statist.}, vol. 7, no. 1, pp. 1--26, 1979.

\bibitem{holm1979multiple} S. Holm, ``A simple sequentially rejective multiple test procedure,'' \emph{Scand. J. Statist.}, vol. 6, no. 2, pp. 65--70, 1979.

% \bibitem{pytorchReproducibility} PyTorch Contributors, ``Reproducibility,'' PyTorch Documentation, accessed Jul. 2026.


\end{thebibliography}
\end{document}